\documentclass{article}

\usepackage[preprint]{neurips_2026}

\usepackage[utf8]{inputenc} 
\usepackage[T1]{fontenc}    
\usepackage{hyperref}       
\usepackage{url}            
\usepackage{booktabs}       
\usepackage{amsfonts}       
\usepackage{nicefrac}       
\usepackage{microtype}      
\usepackage{xcolor}         
\usepackage{amsmath}
\usepackage{amssymb}
\usepackage{amsthm}
\usepackage{graphicx}
\usepackage{algorithm}
\usepackage{algorithmic}
\usepackage{multirow}
\usepackage{mathtools}
\usepackage{enumitem}
\usepackage[capitalize,noabbrev]{cleveref}

\title{Equilibrium Residuals Expose Three Regimes of Matrix-Game\\Strategic Reasoning in Language Models}

\author{%
\textbf{Wenhua Nie \quad BINHAN LUO \quad Zijie Meng}\\
\textbf{Jyh-Shing Roger Jang \quad Ching-Wen Ma}\\[3pt]
\normalfont Correspondence: Wenhua Nie, National Taiwan University\\
\normalfont \texttt{d13944014@ntu.edu.tw}
}

\newtheorem{theorem}{Theorem}

\newtheorem{definition}{Definition}

\newcommand{\RR}{\mathbb{R}}
\newcommand{\exploit}{\mathrm{Exploit}}

\begin{document}

\maketitle

\begin{abstract}
Large language models can score well on named game-theory benchmarks while failing on the same strategic computation once semantic cues are removed.
We show this gap with procedurally generated zero-sum matrix games: a model that recognizes familiar games drops to 34\%, 18\%, and 2\% success on anonymous $2{\times}2$, $3{\times}3$, and $5{\times}5$ payoff matrices.
The benchmark separates semantic recall, learned approximate Nash computation, and an output-interface bottleneck that limits scale.
Training only on $2{\times}2$ and $3{\times}3$ games, supervised fine-tuning raises unseen $5{\times}5$--$7{\times}7$ success from 2\% to 61\%, while exploitability-reward training averages 37\% with high seed variance.
We prove that the exploitability residual is $2$-Lipschitz in payoff perturbations, unlike discontinuous vertex-returning LP equilibrium selectors, explaining why residual training can transfer under payoff shifts even when formatting instability limits mean performance.
A dominated-action padding experiment provides causal evidence: trained models solve $3{\times}3$ games embedded in much larger matrices, while random-padded controls fail and dense $12{\times}12$ games remain near failure.
Procedural evaluation is therefore necessary for measuring strategic reasoning, and residual rewards expose a real but format-limited route to approximate equilibrium computation.

\end{abstract}

\section{Introduction}
\label{sec:intro}

Can large language models reason strategically in matrix games, or do they merely recall textbook solutions?
Recent evaluations report that frontier LLMs, when given chain-of-thought prompting,
select Nash-equilibrium actions in classic games such as the Prisoner's Dilemma,
Chicken, and Battle of the Sexes with high
accuracy~\citep{gandhi2023strategic}, fueling claims that language models possess
genuine game-theoretic competence. Systematic analyses further probe LLM
rationality in economic games, finding persistent gaps in formal
strategic reasoning~\citep{fan2024can}. However, these benchmarks share a critical confound:
every test instance is a \emph{named} game whose equilibrium has appeared thousands of times
in the training corpus.

We demonstrate that this apparent competence is fragile. When we strip away the semantic
cues and present anonymous procedural payoff matrices, performance
drops to 34\%, 18\%, and 2\% on anonymous random
$2{\times}2$, $3{\times}3$, and $5{\times}5$ matrices (\cref{fig:hero}, left).
Few-shot prompting with solved examples does not help---it actually \emph{hurts}, reducing success from 34\% to 10\% on
$2{\times}2$ games, suggesting that examples trigger template-matching rather than algorithmic
reasoning.

This memorization--computation gap raises a fundamental question: \emph{can LLMs learn to
actually compute equilibria, rather than merely recall them?} We investigate this using
procedurally generated matrix games as a controlled microscope. Each training step presents
a freshly sampled payoff matrix, eliminating the possibility of memorization. We study two
learning paradigms:

\begin{itemize}[nosep,leftmargin=*]
\item \textbf{SFT} (Supervised Fine-Tuning): the model is trained on solver-labeled
  $({\rm matrix}, {\rm Nash~strategy})$ pairs computed by linear programming.
\item \textbf{VERGE} (Verifiable Equilibrium-Regret GRPO): the model receives only an
  exploitability reward---a scalar certificate measuring distance from Nash equilibrium---with
  no access to the equilibrium itself.
\end{itemize}

Both methods, trained exclusively on $2{\times}2$ and $3{\times}3$ games, produce models that
generalize to unseen $5{\times}5$--$7{\times}7$ games, improving success rates (exploitability
$< 0.10$) from 2\% to 61\% (SFT) and 37\% on average for VERGE (minimum seed 14\%;
higher variance but no solver labels needed). Our goal is not to claim that VERGE is the
best in-distribution solver---SFT with oracle labels is stronger there---but to use the
contrast between label imitation and residual rewards as a diagnostic for what kind of
computation transfers. We therefore treat VERGE as a mechanism probe rather than a
deployment-ready solver: the central evidence is the separation among semantic lookup,
approximate computation, and output-interface failure. Both methods outperform the maximin
pure-strategy heuristic at $5{\times}5$ (28\%) and uniform mixing (4\%).

A deeper analysis reveals three distinct regimes in this matrix-game setting, each governed by
different mechanisms:

\paragraph{Regime~I: Semantic Lookup.} The base model retrieves memorized name$\to$action
mappings. This regime achieves perfect accuracy on textbook games but degrades sharply on
procedural instances, and is disrupted rather than aided by few-shot examples.

\paragraph{Regime~II: Learned Approximate Computation.} Fine-tuned models exhibit genuine
algorithmic behavior: they satisfy game-theoretic invariances (permutation equivariance error
$< 0.06$; payoff shift/scale invariance error $< 0.06$) and generalize across game sizes.
Under a near-OOD payoff-distribution shift (integer$\to$Gaussian), a single VERGE checkpoint
substantially outperforms SFT, but 100-game multi-seed replication shows that this transfer is
seed- and format-sensitive rather than a stable mean advantage. This is consistent with our
theoretical result that exploitability residuals are Lipschitz-stable in the payoff matrix
while LP equilibrium selectors are discontinuous at support boundaries, but the theorem is
only a mechanism probe, not a learned-policy generalization bound.

\paragraph{Regime~III: Output-Interface Failure.} Both methods exhibit a sharp performance
cliff at $n{\approx}12$. We provide evidence ruling out token length alone:
$3{\times}3$ games padded to $12{\times}12$
token counts retain 83\% success, while true $12{\times}12$ games achieve only 7\%. Embedding
$3{\times}3$ games into $12{\times}12$ matrices via iteratively dominated padding yields
high success for trained models (SFT $95{\pm}8\%$ over three seeds; VERGE $90\%$ in the
diagnostic seed), while random-padded
controls fail (\cref{fig:hero}, right), leaving strategic depth and structured
serialization as the remaining bottlenecks. A low-rank curriculum tradeoff reinforces this finding:
training on larger games ($3$--$5$) improves conditional strategy quality but sharply degrades output-format generalization
(valid rate at $12{\times}12$ drops from 99\% to 0\%).

We further prove (\cref{thm:residual_stability}) that the exploitability residual is
$2$-Lipschitz in payoff perturbations, while vertex-returning LP equilibrium selectors are
discontinuous. This theorem provides a mechanism consistent with the observed pattern:
residual rewards can be more stable under payoff-distribution shift, while label imitation
can still dominate when exact solver labels match the test distribution.

\begin{figure}[t]
\centering
\includegraphics[width=\textwidth]{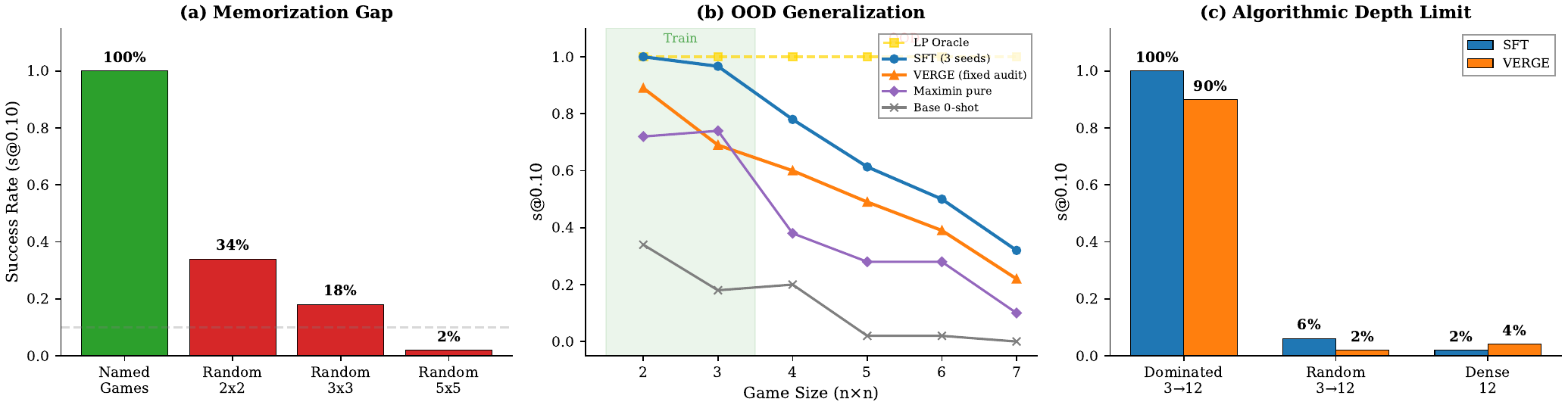}
\caption{\textbf{Three regimes of matrix-game strategic reasoning in LLMs.}
\textbf{(a)}~Base model performs well on named games but falls to 34\%, 18\%, and 2\% on random 2$\times$2, 3$\times$3, and 5$\times$5 matrices (memorization gap).
\textbf{(b)}~SFT and VERGE trained on 2--3$\times$3 generalize to 7$\times$7 OOD, far exceeding maximin and base.
\textbf{(c)}~$3{\times}3$ games embedded in $12{\times}12$ matrices succeed for both SFT and VERGE, while dense and random-padded $12{\times}12$ games fail---a depth/serialization bottleneck in this setting rather than a token-length artifact.}
\label{fig:hero}
\end{figure}

\paragraph{Contributions.} Our main contributions are:
\begin{enumerate}[nosep,leftmargin=*]
\item A causal audit showing that LLM game-theoretic competence on named matrix-game benchmarks
  reflects semantic recall, not computation, with procedural and invariance-based evidence
  (\cref{sec:memorization}).
\item A controlled training comparison showing that solver-supervised SFT is stronger
  in-distribution, while residual-reward VERGE does not achieve a stable mean transfer
  advantage over SFT but can produce high-variance payoff-shift checkpoints
  (\cref{sec:methods,sec:transfer}).
\item Controlled evidence ruling out token length alone as the explanation for the
  $n{\approx}12$ cliff under our LoRA-adapted matrix-game setting, leaving
  strategic depth and serialization as the bottleneck candidates
  (\cref{sec:depth_limit}).
\item A formal gradient-cancellation result proving that role-merged GRPO in zero-sum self-play
  yields zero advantage-weighted strategic gradient, motivating the cooperative exploitability
  formulation (\cref{sec:proof_grpo}).
\end{enumerate}

\section{Related Work}
\label{sec:related}

\paragraph{Strategic reasoning evaluation in LLMs.}
A growing body of work evaluates LLM behavior in strategic settings.
\citet{gandhi2023strategic} show that chain-of-thought prompting enables LLMs to generalize strategic behavior across classic games; related prompting work shows that explicit reasoning traces can improve algorithmic tasks~\citep{wei2022chain,kojima2022large,nye2021show,zelikman2022star}.
\citet{duan2024gtbench} introduce GTBench, showing that even frontier models fail on many
game-theoretic tasks. \citet{fan2024can} study whether LLMs can serve as rational agents
in economic games. \citet{silva2024mixed} evaluate LLMs on mixed-strategy Nash games and
find that performance degrades under even slight game modifications. TMGBench
\citep{wang2024tmgbench} improves coverage over $2{\times}2$ game types and story
contexts, but remains an evaluation benchmark rather than a training and mechanistic
analysis framework. Richer strategic agents such as Cicero combine language with planning in
Diplomacy~\citep{bakhtin2022human}; our matrix setting is narrower but gives exact certificates.
These works share a critical limitation: they primarily evaluate
named or small structured games whose equilibria can leak through the training corpus.
Our procedural-generation approach eliminates this confound by testing on freshly sampled
payoff matrices, revealing that apparent strategic competence is largely memorization.

\paragraph{Algorithmic reasoning and length generalization in transformers.}
Transformers have been studied as computational devices that can learn algorithms from
examples~\citep{garg2022can}. Work on length generalization shows that models trained on
short sequences often fail on longer ones, with the failure mode depending on positional
encoding and task structure~\citep{anil2022exploring}. Our dominated-padding experiment
connects to this literature by separating two failure modes: output-length limitations
(which we rule out) and algorithmic complexity (which remains as the bottleneck candidate
at $n \approx 12$).

\paragraph{Equilibrium computation.}
Classical computational game theory studies Nash equilibria and algorithms for matrix games
and beyond~\citep{nash1951non,lemke1964equilibrium,nisan2007algorithmic}. We use the
zero-sum LP-solvable case because it gives exact exploitability certificates for evaluation.

\paragraph{Reinforcement learning from verifiable rewards.}
RLVR has emerged as an alternative to human preference learning, using mathematical
verifiers~\citep{shao2024deepseekmath}, code execution~\citep{le2022coderl}, or formal
proofs as reward signals. Our exploitability reward is a game-theoretic instance of this
paradigm: it provides a scalar certificate of solution quality without revealing the
solution itself, analogous in spirit to verifier-based reward modeling for math,
step-level verification, and preference learning~\citep{cobbe2021training,lightman2023lets,christiano2017deep,guo2025deepseekr1}. Our multi-seed
replication shows verifier rewards can produce high-transfer checkpoints but not stable
mean dominance, making exploitability a diagnostic rather than proof of RL superiority.

\paragraph{Self-play and multi-agent RL for LLMs.}
Self-play has been applied to LLMs for preference optimization~\citep{wu2024sppo},
iterative fine-tuning~\citep{chen2024self}, and general improvement~\citep{yuan2024self}.
MARSHAL~\citep{yuan2025marshal} trains strategic LLM agents through multi-agent self-play
in cooperative and competitive games, showing gains on held-out games and reasoning tasks.
\citet{munos2024nash} frame preference optimization as Nash equilibrium finding.
In multi-agent RL, Policy-Space Response Oracles (PSRO)~\citep{lanctot2017unified}
provide a principled framework for computing Nash equilibria through iterative best-response
computation.
These works use ``self-play'' loosely---the model competes against itself for quality
improvement, long-horizon interaction, or preference optimization. We study the classical
matrix-game sense with exact exploitability certificates, and we prove an implementation-specific
gradient-cancellation result (\cref{sec:proof_grpo}) showing that role-merged GRPO normalization
removes the advantage-weighted strategic gradient in this setting, motivating our cooperative
exploitability formulation instead.

\paragraph{GRPO and policy gradient methods.}
GRPO~\citep{shao2024deepseekmath} replaces the PPO critic with group-relative normalization,
showing strong results on mathematical reasoning. It has since been widely adopted for LLM
post-training. We prove that role-merged GRPO normalization in zero-sum self-play
yields zero advantage-weighted strategic gradient (\cref{sec:proof_grpo}), motivating our cooperative
exploitability formulation where each response is independently scored.

\section{Method and Theory}
\label{sec:method}

\subsection{Procedural Game Testbed}
\label{sec:memorization}

We study two-player zero-sum matrix games defined by a payoff matrix
$A \in \RR^{n \times n}$, where the row player receives $A_{ij}$ and the column player
receives $-A_{ij}$ when row~$i$ and column~$j$ are played.
A \emph{mixed strategy} is a probability distribution over actions:
$\mathbf{p} \in \Delta^n$ for the row player, $\mathbf{q} \in \Delta^n$ for the column player.

\begin{definition}[Exploitability / NashConv]
\label{def:exploitability}
For strategies $(\mathbf{p}, \mathbf{q})$ in a zero-sum game with payoff matrix~$A$,
the \emph{exploitability} is:
\begin{equation}
\exploit(A, \mathbf{p}, \mathbf{q}) = \underbrace{\max_i (A\mathbf{q})_i - \mathbf{p}^\top A \mathbf{q}}_{\text{row regret}} + \underbrace{\mathbf{p}^\top A \mathbf{q} - \min_j (\mathbf{p}^\top A)_j}_{\text{column regret}}.
\label{eq:exploitability}
\end{equation}
$\exploit = 0$ if and only if $(\mathbf{p}, \mathbf{q})$ is a Nash equilibrium.
We normalize by the payoff range: $\bar{\exploit} = \exploit / (2(\max A - \min A))$, and
define the \emph{exploitability reward} as $r = 1 - \bar{\exploit} \in [0, 1]$.\footnote{When
$\max A = \min A$ (constant matrix), we perturb one entry by $+1$ to avoid division by zero;
see \cref{sec:exp_details}.}
\end{definition}

\paragraph{Game generation.}
Each training step samples a fresh $n \times n$ matrix with integer entries from $[-9, 9]$,
normalized to $[-2, 2]$ by the payoff range. Training uses $n \in \{2, 3\}$; evaluation
extends to $n \in \{2, \ldots, 7\}$ (OOD) and $n \in \{8, \ldots, 20\}$ (far OOD).
This procedural generation eliminates memorization: the model must analyze each matrix
from scratch.

\paragraph{Prompt and output format.}
The model receives the matrix in numerical form and outputs a JSON object
\texttt{\{"row":[...],"col":[...]\}} specifying mixed strategies for both players.
Strategies are normalized to the probability simplex after parsing.

\paragraph{Cooperative exploitability formulation.}
In both training methods (SFT and VERGE), the model proposes a complete
strategy pair $(\mathbf{p}, \mathbf{q})$ per turn, receiving the exploitability reward
$r = 1 - \bar{\exploit}(A, \mathbf{p}, \mathbf{q})$. This is a single-agent cooperative
formulation: each response is scored independently on how close its proposed strategies
are to a Nash equilibrium. We note that a natural alternative---role-merged self-play
where each response is scored as both row and column player---leads to gradient
cancellation (\cref{sec:proof_grpo}). Our cooperative formulation avoids this
pathology by construction.

\subsection{Two Teaching Methods}
\label{sec:methods}

\paragraph{SFT (Supervised Fine-Tuning).}
For each generated game, we compute the Nash equilibrium using linear programming
(the minimax theorem guarantees existence in zero-sum games). The model is trained with
cross-entropy loss on (matrix, equilibrium) pairs, using LoRA ($r=32$, $\alpha=64$) on
all attention and MLP projections.

\paragraph{VERGE (Verifiable Equilibrium-Regret GRPO).}
The model generates strategy proposals and receives the exploitability reward
$r = 1 - \bar{\exploit}$ as feedback. No Nash equilibrium labels are provided.
GRPO~\citep{shao2024deepseekmath} generates $G=8$ samples per game and computes
group-relative advantages over the accumulated training batch, with clipped policy
gradient updates and KL regularization against the reference policy.

\paragraph{Theoretical comparison.}
SFT and VERGE optimize fundamentally different objectives. SFT imitates a specific
equilibrium selector $T(A) = (\mathbf{p}^*, \mathbf{q}^*)$ determined by the LP solver.
VERGE minimizes the exploitability residual $\exploit(A, \mathbf{p}, \mathbf{q})$.
The following elementary design lemma characterizes their distinct stability properties
and motivates the payoff-shift diagnostics; it is not a learned-policy generalization
bound:

\begin{theorem}[Residual stability vs.\ selector instability]
\label{thm:residual_stability}
For any fixed strategies $(\mathbf{p}, \mathbf{q}) \in \Delta^n \times \Delta^n$:
\begin{equation}
|\exploit(A, \mathbf{p}, \mathbf{q}) - \exploit(B, \mathbf{p}, \mathbf{q})| \leq 2 \|A - B\|_\infty.
\end{equation}
That is, $\exploit(\cdot, \mathbf{p}, \mathbf{q})$ is Lipschitz in the payoff matrix.

Conversely, for any $\delta > 0$, there exist games $A, A'$ with
$\|A - A'\|_\infty < \delta$ such that $\|T(A) - T(A')\|_1 = \Omega(1)$,
where $T$ is any equilibrium selector that returns a vertex solution at degeneracy
(as common deterministic LP tie-breaking rules do).
\end{theorem}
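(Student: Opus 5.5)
The plan is to handle the two halves separately.

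For the Lipschitz bound, observe that the bilinear terms $\mathbf{p}^\top A\mathbf{q}$ cancel in \cref{eq:exploitability}. This leaves
\[
\exploit(A,\mathbf{p},\mathbf{q}) = \max_i (A\mathbf{q})_i - \min_j (\mathbf{p}^\top A)_j .
\]
Write $E = A - B$. Since $\mathbf{q}\in\Delta^n$, each coordinate satisfies
\[
|(E\mathbf{q})_i| \le \sum_j |E_{ij}|\, q_j \le \|E\|_\infty ,
\]
where $\|\cdot\|_\infty$ is read as the entrywise max norm, as the factor $2$ indicates. The same bound holds for $|(\mathbf{p}^\top E)_j|$. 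The maps $x\mapsto \max_i x_i$ and $x\mapsto \min_j x_j$ are $1$-Lipschitz in the sup norm. Hence
\[
\Bigl|\max_i (A\mathbf{q})_i - \max_i (B\mathbf{q})_i\Bigr| \le \|E\|_\infty ,
\]
and likewise for the min term. The triangle inequality then gives the constant $2$. If the operator $\infty$-norm (max row sum) is intended instead, the row term is bounded the same way. For the column term I would note that the max absolute column sum is not controlled by the max row sum. So I would state the result for the entrywise norm, which is the natural reading and is dominated by both operator norms on the row side.

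For the converse I would build an explicit $2\times 2$ family where the equilibrium set jumps at a degenerate point. Take
\[
A_\epsilon = \begin{pmatrix} 1 & 1 \\ \epsilon & 0\end{pmatrix}.
\]
For $\epsilon<1$, row 1 weakly dominates row 2, so $\mathbf{p}^*=(1,0)$. The column player's equilibrium strategies are then all $\mathbf{q}$ with $q_1 \ge \dots$; this needs a check. A cleaner choice uses a game with a continuum of column equilibria at $\epsilon=0$ and unique, different ones on either side:
\[
A_\epsilon = \begin{pmatrix} 0 & \epsilon \\ 0 & -\epsilon\end{pmatrix}.
\]
For $\epsilon>0$ the row player prefers row 1, so the column player's unique best response is column 1, giving $\mathbf{q}=(1,0)$. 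For $\epsilon<0$ the unique equilibrium column is column 2. At $\epsilon=0$ every profile is an equilibrium.

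Take $A=A_{\delta/3}$ and $A'=A_{-\delta/3}$. Then $\|A-A'\|_\infty = 2\delta/3 < \delta$. The equilibria are unique, so any selector must return $T(A)$ with $\mathbf{q}=(1,0)$ and $T(A')$ with $\mathbf{q}=(0,1)$. This gives $\|T(A)-T(A')\|_1 \ge 2$. For the row part, I would verify uniqueness by solving the small LP or by direct best-response reasoning. If $\mathbf{p}$ is not unique, the $\Omega(1)$ gap in $\mathbf{q}$ already suffices.

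The main obstacle is making the construction genuinely force $\Omega(1)$ jumps for \emph{every} selector. Uniqueness of equilibria on both sides of the degenerate point does this. The vertex-at-degeneracy hypothesis is then only needed if one also wants to say $T$ is discontinuous \emph{at} $A_0$. For that I would add that a vertex-returning $T(A_0)$ must pick a pure $\mathbf{q}$, and the perturbation on the opposite side yields the other pure strategy. I would also double-check the sign conventions so that each side's equilibrium is unique, and embed the example in $n\times n$ via dominated padding if general $n$ is required.
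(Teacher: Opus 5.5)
Your Lipschitz half is correct and is the paper's argument: cancel the bilinear terms, bound each of the $\max$ and $\min$ differences by the entrywise max norm using $\|\mathbf{q}\|_1=\|\mathbf{p}\|_1=1$, and add. The gap is in the converse, where your family $A_\epsilon=\bigl(\begin{smallmatrix}0&\epsilon\\0&-\epsilon\end{smallmatrix}\bigr)$ produces no jump. For $\epsilon<0$ the column player's optimal strategy is not column 2. Column 1 is identically zero, while column 2 pays the row player $|\epsilon|$ in one of the rows. So the column player's guarantee $\max_i (A_\epsilon\mathbf{q})_i=|\epsilon|\,q_2$ is uniquely minimized at $\mathbf{q}=(1,0)$ for \emph{every} $\epsilon\neq0$. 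Equivalently, $A_{-\epsilon}$ is $A_\epsilon$ with its rows swapped, which cannot change the minimizer's optimum. The equilibria are also not unique in $\mathbf{p}$: the row player's optimal set is $\{p_1\ge 1/2\}$ for $\epsilon>0$ and $\{p_1\le 1/2\}$ for $\epsilon<0$. The two equilibrium sets therefore share the profile $((1/2,1/2),(1,0))$. A selector returning it for all $\epsilon\neq0$ is a legitimate equilibrium selector with zero jump between $A_{\delta/3}$ and $A_{-\delta/3}$. So neither your ``every selector'' conclusion nor your fallback (that the opposite side yields the other pure $\mathbf{q}$) holds.

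Your underlying strategy is sound: two nondegenerate games on opposite sides of a degenerate one, each with a unique equilibrium, far apart. It would prove more than the paper does, since it needs no vertex hypothesis. It just needs a correct example such as $A_\epsilon=\mathrm{diag}(\epsilon,-\epsilon)$. For $\epsilon>0$ row 1 strictly dominates, forcing the unique equilibrium $((1,0),(0,1))$. For $\epsilon<0$ the unique equilibrium is $((0,1),(1,0))$. The $\ell_1$ gap is then $4$ for any selector, while $\|A_{\delta/3}-A_{-\delta/3}\|_\infty=2\delta/3$. The paper instead compares $\epsilon M$, with $M$ matching pennies (unique uniform equilibrium for $\epsilon>0$), against the zero matrix. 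The vertex hypothesis is used essentially at $A_0=0$, forcing a pure output at $\ell_1$-distance at least $1$ from the uniform profile. You could also salvage your own matrix in the paper's style by applying the vertex hypothesis to $\mathbf{p}$. If $T(A_0)$ has $\mathbf{p}=(1,0)$, every equilibrium of $A_{-\delta/2}$ has $p_1\le 1/2$ and so lies at $\ell_1$-distance at least $1$. The case $\mathbf{p}=(0,1)$ is symmetric, using $A_{\delta/2}$.
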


\begin{proof}[Proof sketch]
Expanding~\eqref{eq:exploitability}, the $\mathbf{p}^\top A \mathbf{q}$ terms cancel,
giving $\exploit(A, \mathbf{p}, \mathbf{q}) = \max_i (A\mathbf{q})_i - \min_j (\mathbf{p}^\top A)_j$.
The Lipschitz bound follows from
$|\max_i (A\mathbf{q})_i - \max_i (B\mathbf{q})_i| \leq \|A - B\|_\infty$
(since $\|\mathbf{q}\|_1 = 1$) and the analogous bound for the $\min_j$ term,
where $\|A - B\|_\infty = \max_{i,j} |A_{ij} - B_{ij}|$ is the max-entry norm.
Selector instability follows from degeneracy: for the scaled game
$A_\epsilon = \epsilon \cdot M$ where $M$ has a unique interior equilibrium,
$T(A_\epsilon) \to T(M)$ as $\epsilon \to 0^+$, but at $\epsilon = 0$ the all-zeros
matrix admits every strategy as a Nash equilibrium, and a vertex-returning tie breaker returns a vertex,
causing an $\Omega(1)$ jump. Full proof in~\cref{sec:proof_stability}.
\end{proof}

\paragraph{Consequence.} For any fixed strategy pair, VERGE's verification signal is
smooth under payoff perturbations, whereas a deterministic solver label can jump at
support-boundary degeneracies. This does not prove learned-model OOD generalization; it
motivates the selected payoff-shift tests in~\cref{sec:transfer}.

\paragraph{Why not zero-sum self-play?}
A natural alternative to cooperative exploitability training is competitive self-play,
where each response is scored as both row and column player with antisymmetric rewards
$(r_i, -r_i)$. We prove in~\cref{sec:proof_grpo} that \emph{role-merged} GRPO---where
the same generated output $y_k$ is evaluated in both roles and both rewards enter the
same normalization group---yields zero advantage-weighted strategic gradient, excluding
any KL term that only pulls toward the reference policy. This specific failure mode
motivates our cooperative exploitability formulation, where GRPO with sufficient
gradient accumulation
trains stably (see~\cref{sec:exp_details} for training details).

\section{Experiments}
\label{sec:experiments}

All experiments use Qwen3.5-9B with LoRA ($r=32$, $\alpha=64$) on
A800-80GB GPUs. SFT and VERGE train for 2000 steps with learning rates
$5{\times}10^{-5}$ and $10^{-5}$. Evaluation uses 50 games per size and four
samples per game at temperature 0.7. We report $s@0.10$: the fraction of games
where the \emph{best-of-4} sample achieves $\bar{\exploit}<0.10$. This uses
oracle exploitability selection and therefore measures the learned hypothesis
space, not deployment-ready decoding; the main OOD table also reports pass@1.
All evaluation games use seed 99. The uniform row in \cref{tab:main_ood}
calibrates a non-adaptive mixed-strategy null: 4\% success at both
$5{\times}5$ and $7{\times}7$. For $N{=}50$, the worst-case binomial standard
error is 7.1 percentage points; the distribution-transfer table uses $N{=}30$
diagnostic games per condition (9.1 points).

\paragraph{Evidence package.}
The experiments ask whether $2{\times}2$--$3{\times}3$ training transfers to larger
anonymous games, whether residual rewards behave differently from solver labels under
payoff shift, whether the $n{\approx}12$ cliff is caused by length, dimension, or
strategic depth, whether higher-rank adapters remove it, and whether zero-sum self-play
needs the cooperative exploitability objective. The positive claims are restricted to
these controlled settings; we do not claim that VERGE globally dominates SFT.

\subsection{Out-of-Distribution Generalization}
\label{sec:ood}

\begin{table}[t]
\centering
\caption{OOD generalization on zero-sum games (train 2--3, test 2--7). All methods
use the same evaluation games (seed=99, 50 games per size). $s@0.10$ is best-of-4
success with exploitability selection; pass@1 uses only the first sampled output.
Fine-tuned rows report mean$\pm$std over 3 independent training seeds; dashes mark
unevaluated few-shot sizes.}
\label{tab:main_ood}
\vspace{0.5em}
\small
\begin{tabular}{lcccccc}
\toprule
\textbf{Method} & \textbf{2$\times$2} & \textbf{3$\times$3} & \textbf{4$\times$4} & \textbf{5$\times$5} & \textbf{6$\times$6} & \textbf{7$\times$7} \\
\midrule
Base (0-shot) & 0.34 & 0.18 & 0.20 & 0.02 & 0.02 & 0.00 \\
Few-shot (3 ex.) & 0.10 & 0.02 & 0.00 & 0.00 & -- & -- \\
Qwen3-32B (8\% valid) & 0.00 & 0.04 & 0.02 & 0.00 & 0.02 & 0.00 \\
Uniform & 0.18 & 0.04 & 0.16 & 0.04 & 0.08 & 0.04 \\
Maximin (pure) & 0.72 & 0.74 & 0.38 & 0.28 & 0.28 & 0.10 \\
\midrule
SFT $s@0.10$ (3 seeds) & \textbf{1.00}$\pm$.00 & \textbf{0.97}$\pm$.04 & \textbf{0.78}$\pm$.00 & \textbf{0.61}$\pm$.09 & \textbf{0.50}$\pm$.05 & \textbf{0.32}$\pm$.09 \\
SFT pass@1 & \textbf{1.00}$\pm$.00 & \textbf{0.84}$\pm$.09 & 0.49$\pm$.01 & 0.33$\pm$.07 & 0.23$\pm$.03 & 0.12$\pm$.03 \\
Llama-3-8B SFT $s@0.10$ & -- & -- & -- & \textbf{0.61} & -- & 0.18 \\
Llama-3-8B SFT pass@1 & -- & -- & -- & 0.36 & -- & 0.07 \\
VERGE $s@0.10$ (3 seeds) & 0.85$\pm$.12 & 0.66$\pm$.22 & 0.50$\pm$.24 & 0.37$\pm$.22 & 0.29$\pm$.26 & 0.18$\pm$.09 \\
VERGE pass@1 & 0.77$\pm$.21 & 0.50$\pm$.26 & 0.31$\pm$.20 & 0.21$\pm$.18 & 0.11$\pm$.10 & 0.07$\pm$.03 \\
\midrule
LP Oracle & 1.00 & 1.00 & 1.00 & 1.00 & 1.00 & 1.00 \\
\bottomrule
\end{tabular}
\end{table}

\cref{tab:main_ood} shows the core result. Both SFT and VERGE improve upon the base model:
$5 \times 5$ success rises from 2\% (base) to 61\% (SFT) and 37\% (VERGE; seeds 58\%,
40\%, and 14\%), versus 28\% for maximin pure strategy.
A 150-game replication at $5{\times}5$/$7{\times}7$ gives SFT
$0.63{\pm}.01$/$0.28{\pm}.05$ and VERGE $0.42{\pm}.18$/$0.16{\pm}.15$,
confirming the OOD pattern while preserving VERGE's higher seed sensitivity.
Thus VERGE's primary diagnostic signal is not absolute performance---where SFT with oracle
labels dominates---but selected distribution transfer (\cref{sec:transfer}) and the absence
of solver supervision.
Few-shot prompting \emph{decreases} performance (34\%$\to$10\% at $2 \times 2$), suggesting
template-matching interference rather than induced computation.
Under this strict JSON-only output contract, a zero-shot Qwen3-32B format-stress
baseline obtains mean $s@0.10=0.01$ and valid-output rate 0.08 across $2$--$7$;
we therefore do not interpret it as an isolated test of reasoning capacity.

Without oracle selection, SFT pass@1 remains far above the base model: 0.33$\pm$0.07 at
$5 \times 5$ and 0.12$\pm$0.03 at $7 \times 7$, versus 0.00 for the base model at both
sizes. VERGE pass@1 is 0.21$\pm$0.18 at $5 \times 5$ and 0.07$\pm$0.03 at
$7 \times 7$. Best-of-4 therefore strengthens the headline numbers, but the pass@1 rows
show that the OOD signal does not rely solely on oracle selection.

\subsection{Distribution Transfer}
\label{sec:transfer}

\begin{table}[t]
\centering
\caption{Single-seed diagnostic distribution transfer (train: integer $[-9,9]$; test:
Gaussian, sparse, integer). $s@0.10$ on $8 \times 8$ to $20 \times 20$ OOD games,
30 games per condition for this table. The 100-game Gaussian replication uses
3 seeds for SFT/final VERGE and 4 partially overlapping seeds for VERGE-600.
The uniform
baseline becomes dominant at $n \geq 12$ on Gaussian/sparse due to random-matrix
concentration; the learned models' advantage is at smaller, structured sizes.}
\label{tab:dist_transfer}
\vspace{0.5em}
\small
\begin{tabular}{llccccc}
\toprule
\textbf{Distribution} & \textbf{Method} & \textbf{8$\times$8} & \textbf{10$\times$10} & \textbf{12$\times$12} & \textbf{15$\times$15} & \textbf{20$\times$20} \\
\midrule
\multirow{3}{*}{Gaussian} & Uniform & 0.33 & 0.47 & 0.63 & 0.93 & \textbf{1.00} \\
& SFT & 0.50 & 0.30 & 0.40 & 0.20 & 0.37 \\
& VERGE & \textbf{0.83} & \textbf{0.60} & 0.33 & 0.27 & 0.00 \\
\midrule
\multirow{3}{*}{Sparse} & Uniform & 0.83 & 1.00 & \textbf{1.00} & \textbf{1.00} & \textbf{1.00} \\
& SFT & \textbf{1.00} & 0.97 & 0.80 & 0.70 & 0.37 \\
& VERGE & 1.00 & 0.93 & 0.63 & 0.47 & 0.07 \\
\midrule
\multirow{3}{*}{Integer} & Uniform & 0.03 & 0.03 & 0.00 & 0.10 & 0.10 \\
& SFT & 0.10 & 0.07 & 0.00 & 0.00 & 0.00 \\
& VERGE & \textbf{0.23} & \textbf{0.07} & 0.03 & 0.00 & 0.00 \\
\bottomrule
\end{tabular}
\end{table}

\cref{tab:dist_transfer} reveals a clear but scoped pattern. The single-seed diagnostic
contains high-transfer VERGE cells on Gaussian payoffs, but a 100-game replication changes
the conclusion from ``mean advantage'' to ``high-variance mechanism probe.'' Across three
training seeds, SFT is stable on Gaussian $8{\times}8$ and $10{\times}10$
($0.49{\pm}.02$, $0.44{\pm}.08$). Final VERGE is lower and much higher variance
($0.36{\pm}.29$, $0.28{\pm}.30$). VERGE-600 over 4 partially overlapping
seeds reaches $0.45{\pm}.22$/$0.53{\pm}.34$ at $8{\times}8$/$10{\times}10$,
but the shared seeds move in opposite directions, so we treat this as checkpoint
variability rather than a controlled early-vs-late gain. Thus residual rewards can
produce high-transfer checkpoints, but output-format stability controls whether that
potential appears in the final policy.

Importantly, the uniform (maximin) baseline becomes increasingly competitive on
Gaussian and sparse distributions at large $n$: it reaches $s@0.10 = 1.00$ on
Gaussian $20 \times 20$ and sparse $12$--$20$. This occurs because large random
matrices concentrate toward games where uniform is near-optimal. The learned
models' advantage is confined to $n \leq 10$--$12$, before random-matrix concentration
makes uniform mixing near-optimal. We do not interpret the small non-monotonic changes
within a row (e.g., SFT on Gaussian $15$--$20$) as scaling laws under this 30-game
diagnostic evaluation.

On sparse games at $8 \times 8$, SFT and VERGE both reach $s@0.10=1.00$, but SFT's
sample-level valid-output rate remains higher (100\% vs.\ 70\%). At larger sparse sizes
this format gap widens (70\% vs.\ 2.5\% valid at $20 \times 20$), so when the bottleneck
is format reliability rather than strategy quality, supervised label training has an
advantage.
Thus VERGE's transfer advantage should be read as conditional on producing parseable
strategies, not as a claim that residual rewards solve the output-interface problem.
Its $0.00$ on Gaussian $20 \times 20$ is therefore primarily a format-validity failure
(valid rate 0\%), not evidence that a parseable VERGE strategy is exploitable in every
such game.
On integer payoffs (the training distribution), both learned methods and the uniform
baseline perform poorly at $n \geq 12$, consistent with the depth/serialization bottleneck
identified in~\cref{sec:depth_limit}.

\subsection{The $n \approx 12$ Depth/Serialization Bottleneck}
\label{sec:depth_limit}

Both methods show a sharp performance cliff near $n = 12$. We design two controls
to test whether this reflects algorithmic complexity or output-length limitations.

\paragraph{Padding control.}
We pad $3 \times 3$ game prompts with irrelevant system-note text to match the exact token
count of $12 \times 12$ prompts (798 tokens). If the cliff were caused by token length or
positional encoding degradation, padded $3 \times 3$ games should also fail.

\paragraph{Dominated-action padding with random-padding control.}
We embed $3 \times 3$ games into $12 \times 12$ matrices by adding actions that are
irrelevant under iterated dominance. New rows have payoffs below the original minimum
(strictly dominated for the row player). On the original row support, new columns have
payoffs above the original maximum and are therefore dominated for the column player, who
minimizes the row player's payoff. The true Nash equilibrium is unchanged---only the
matrix dimension increases. As a negative control, we also place the same $3 \times 3$
block in the top-left corner of an otherwise random $N \times N$ game. If success came
from detecting a familiar sub-block or from output-format shortcuts, random padding
should also succeed; if dominated-action reasoning is the operative mechanism, only dominated padding
should preserve high success.

\begin{table}[t]
\centering
\caption{Single-seed controlled evidence for the $n \approx 12$ cliff under LoRA adaptation
via dominated-action padding and a random-padding negative control. Entries report $s@0.10$ over 50 games with 4
samples per game. Dominated padding preserves the original $3 \times 3$ equilibrium;
random padding embeds the same $3 \times 3$ block but changes the game.}
\label{tab:causal_cliff}
\vspace{0.5em}
\small
\begin{tabular}{llcccc}
\toprule
\textbf{Model} & \textbf{Condition} & $8{\times}8$ & $12{\times}12$ & $15{\times}15$ & $20{\times}20$ \\
\midrule
\multirow{3}{*}{Base} 
& Dense random & 0.00 & 0.00 & 0.00 & 0.00 \\
& Dominated $3{\to}N$ & 0.00 & 0.00 & 0.02 & 0.00 \\
& Random $3{\to}N$ & 0.00 & 0.00 & 0.00 & 0.00 \\
\midrule
\multirow{3}{*}{SFT}
& Dense random & 0.16 & 0.02 & 0.02 & 0.02 \\
& Dominated $3{\to}N$ & \textbf{1.00} & \textbf{1.00} & \textbf{0.98} & \textbf{0.98} \\
& Random $3{\to}N$ & 0.22 & 0.06 & 0.02 & 0.04 \\
\midrule
\multirow{3}{*}{VERGE}
& Dense random & 0.36 & 0.04 & 0.02 & 0.04 \\
& Dominated $3{\to}N$ & \textbf{0.94} & \textbf{0.90} & \textbf{0.90} & \textbf{0.76} \\
& Random $3{\to}N$ & 0.22 & 0.02 & 0.02 & 0.02 \\
\bottomrule
\end{tabular}
\end{table}

\cref{tab:causal_cliff} provides the strongest controlled evidence in the paper. Across three
SFT seeds, $3 \times 3$ games embedded in $12 \times 12$ matrices via iteratively
dominated padding achieve $s@0.10=0.95{\pm}.08$ and remain at $0.97{\pm}.01$ even
through $20 \times 20$ padding, while dense $12 \times 12$ games achieve
$0.01{\pm}.01$ and random-padded $12 \times 12$ games achieve only $0.06$ in the
negative-control seed. VERGE shows the same separation in the diagnostic seed:
$0.90$ on dominated $3{\to}12$ padding versus $0.04$ on dense $12 \times 12$ and
$0.02$ on random $3{\to}12$ padding, with gradual degradation to $0.76$ at
$20 \times 20$. The base model without fine-tuning fails on both dominated and
random-padded games, confirming that dominated-action recognition requires training.
The gap between dominated and random padding rules out token length
and embedded-block recognition as sufficient explanations for success. Because the
dominated and random controls have the same output dimensionality, the experiment also
shows that output dimension alone is not sufficient to explain the success cases; the
remaining failure on dense $12{\times}12$ games must involve strategic depth and
structured serialization together.

\begin{figure}[t]
\centering
\includegraphics[width=0.85\textwidth]{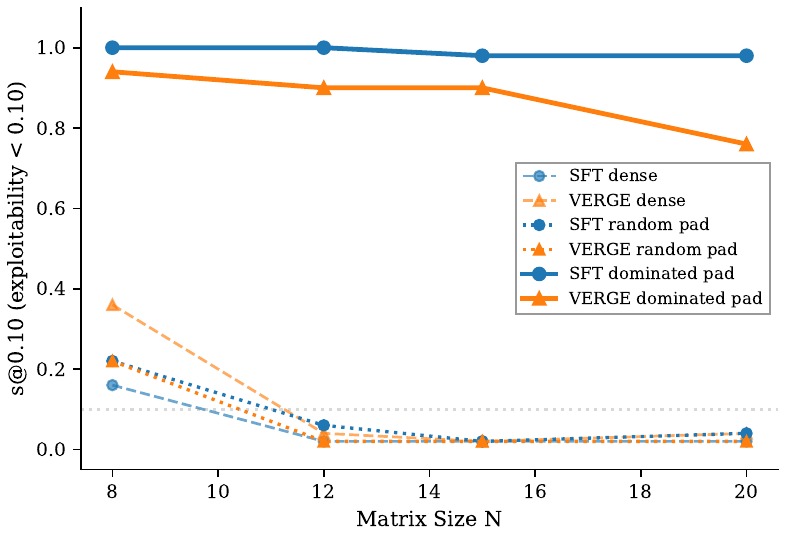}
\caption{Dominated-padding experiment: $s@0.10$ as a function of padded matrix size $N$
for $3{\times}3 \to N$ embeddings with iteratively dominated actions versus dense random
$N{\times}N$ games and random-padded negative controls. Dominated embeddings remain far
above both controls through $N{=}20$.}
\label{fig:dom_pad}
\end{figure}

\paragraph{Equivariance tests.}
To verify that the model learned genuine algorithmic behavior rather than position-dependent
heuristics, we test game-theoretic invariances. For random row/column permutations of the
payoff matrix, the mean reward difference is $0.019$--$0.054$ (near-zero); for random payoff
shifts and positive scaling, the error is $0.029$--$0.044$. The model is approximately
equivariant to the symmetries that any game-solving algorithm must respect.

\subsection{Curriculum and Serialization Tradeoff}
\label{sec:curriculum}

Training on larger games ($n \in \{3, 4, 5\}$ instead of $\{2, 3\}$) reveals a
surprising low-rank tradeoff. SFT-large averages $s@0.10 = 0.62$ at $7 \times 7$ over two seeds
(vs.\ $0.32$ for the three-seed SFT row on $\{2, 3\}$), showing improved
\emph{conditional} strategy quality.
However, at $12 \times 12$, the valid-output rate drops to 0\% (compared to 99\% for SFT on
$\{2, 3\}$), meaning the model cannot produce a parseable probability vector of length~12.

Scratchpad-output supervision separates the two failure modes. In 700-step $r=32$
controls with $N{=}30$ evaluation games, a $2$--$3$ curriculum reaches only
$0.19{\pm}.04$ at $7{\times}7$ and $0.07{\pm}.06$ at $12{\times}12$. Expanding the
curriculum to $2$--$5$ improves near-OOD success to $0.50{\pm}.06$ at $7{\times}7$,
but remains at $0.04{\pm}.04$ at $12{\times}12$ despite high $12{\times}12$ valid-output
rate ($0.93{\pm}.05$). Thus scratchpad supervision largely restores valid
12-dimensional serialization, but the resulting strategies remain exploitable.

This reveals two separable capabilities: \emph{strategic computation}
(selecting good probability weights) and \emph{structured output emission}
(serializing a valid $n$-dimensional distribution). A higher-rank control strengthens
the diagnosis: increasing LoRA from $r=32$ to $r=128$ improves near-OOD success
(final $s@0.10 = 1.00,0.98,0.80,0.62,0.42,0.26$ on $2$--$7$), but still yields
$0\%$ valid outputs and $s@0.10=0.00$ at $12 \times 12$. The cliff is therefore not
removed by scratchpad traces or a simple $4\times$ LoRA-rank increase; whether full
fine-tuning or explicit structured decoding resolves it remains open.

\subsection{VERGE Training Dynamics}
\label{sec:grpo_exp}

VERGE uses GRPO~\citep{shao2024deepseekmath} with exploitability reward, $G=8$ samples
per generated game and batch-level advantage normalization with 4-step gradient accumulation.
Training converges within 2000 steps on
$2{\times}2$/$3{\times}3$ games. We note that gradient accumulation is critical:
reducing to per-mini-batch updates (effective batch size 2) causes gradient norm spikes
up to $93\times$ baseline and training collapse within 500 steps (see~\cref{sec:exp_details}).
This motivates our choice of $G=8$ with 4-step accumulation throughout.

\section{Discussion and Conclusion}
\label{sec:conclusion}

Matrix games separate semantic lookup, learned approximate computation, and output-interface
failure: procedural training improves random $5{\times}5$ success from 2\% to 61\%, while
dominated-action padding isolates the $n{\approx}12$ cliff from token length. The study is
deliberately narrow---mostly Qwen3.5-9B LoRA on zero-sum matrices, with $N{=}30$--150
controlled evaluations and seed-sensitive VERGE runs---so the three-regime claim should be
read as a testbed result, not a universal claim about all strategic interaction. A
scratchpad prompt alone remains near failure at $12{\times}12$ ($s@0.10=0.04$, pass@1
$=0.00$), and scratchpad SFT improves $7{\times}7$ near-OOD success but leaves
$12{\times}12$ success at $0.04{\pm}.04$. Full fine-tuning, structured decoding,
variance-controlled residual RL, and general-sum or extensive-form games remain open.

\newpage
\label{page:references-start}

\bibliographystyle{plainnat}
\bibliography{references}

@inproceedings{garg2022can,
  author       = {Shivam Garg and Dimitris Tsipras and Percy Liang and Gregory Valiant},
  title        = {What Can Transformers Learn In-Context? {A} Case Study of Simple Function Classes},
  booktitle    = {NeurIPS},
  year         = {2022},
}

@inproceedings{le2022coderl,
  author       = {Hung Le and Yue Wang and Akhilesh Deepak Gotmare and Silvio Savarese and Steven Chu-Hong Hoi},
  title        = {CodeRL: Mastering Code Generation through Pretrained Models and Deep Reinforcement Learning},
  booktitle    = {NeurIPS},
  year         = {2022},
}

@inproceedings{yuan2024self,
  author       = {Weizhe Yuan and Richard Yuanzhe Pang and Kyunghyun Cho and Xian Li and Sainbayar Sukhbaatar and Jing Xu and Jason Weston},
  title        = {Self-Rewarding Language Models},
  booktitle    = {ICML},
  year         = {2024},
}

@inproceedings{munos2024nash,
  author       = {R{\'e}mi Munos and Michal Valko and Daniele Calandriello and Mohammad Gheshlaghi Azar and Mark Rowland and Daniel Guo and Yunhao Tang and Matthieu Geist and Thomas Mesnard and others},
  title        = {Nash Learning from Human Feedback},
  booktitle    = {ICML},
  year         = {2024},
}

@article{shao2024deepseekmath,
  author       = {Zhihong Shao and Peiyi Wang and Qihao Zhu and Runxin Xu and Junxiao Song and Xiao Bi and Haowei Zhang and Mingchuan Zhang and Y.K. Li and Y. Wu and Daya Guo},
  title        = {{DeepSeekMath}: Pushing the Limits of Mathematical Reasoning in Open Language Models},
  journal      = {arXiv preprint arXiv:2402.03300},
  year         = {2024},
}

@article{gandhi2023strategic,
  author       = {Kanishk Gandhi and Dorsa Sadigh and Noah D. Goodman},
  title        = {Strategic Reasoning with Language Models},
  journal      = {arXiv preprint arXiv:2305.19165},
  year         = {2023},
}

@inproceedings{duan2024gtbench,
  author       = {Jinhao Duan and Renming Zhang and James Diffenderfer and Bhavya Kailkhura and Lichao Sun and Elias Stengel-Eskin and Mohit Bansal and Tianlong Chen and Kaidi Xu},
  title        = {{GTBench}: Uncovering the Strategic Reasoning Capabilities of {LLMs} via Game-Theoretic Evaluations},
  booktitle    = {NeurIPS},
  year         = {2024},
}

@inproceedings{fan2024can,
  author       = {Caoyun Fan and Jindou Chen and Yaohui Jin and Hao He},
  title        = {Can Large Language Models Serve as Rational Players in Game Theory? {A} Systematic Analysis},
  booktitle    = {AAAI},
  year         = {2024},
}

@article{silva2024mixed,
  author       = {Alonso Silva},
  title        = {Large Language Models Playing Mixed Strategy Nash Equilibrium Games},
  journal      = {arXiv preprint arXiv:2406.10574},
  year         = {2024},
}

@article{wang2024tmgbench,
  author       = {Haochuan Wang and Xiachong Feng and Lei Li and Zhanyue Qin and Dianbo Sui and Lingpeng Kong},
  title        = {{TMGBench}: A Systematic Game Benchmark for Evaluating Strategic Reasoning Abilities of {LLMs}},
  journal      = {arXiv preprint arXiv:2410.10479},
  year         = {2024},
}

@inproceedings{anil2022exploring,
  author       = {Cem Anil and Yuhuai Wu and Anders Andreassen and Aitor Lewkowycz and Vedant Misra and Vinay Ramasesh and Ambrose Slone and Guy Gur-Ari and Ethan Dyer and Behnam Neyshabur},
  title        = {Exploring Length Generalization in Large Language Models},
  booktitle    = {NeurIPS},
  year         = {2022},
}

@inproceedings{chen2024self,
  author       = {Zixiang Chen and Yihe Deng and Huizhuo Yuan and Kaixuan Ji and Quanquan Gu},
  title        = {Self-Play Fine-Tuning Converts Weak Language Models to Strong Language Models},
  booktitle    = {ICML},
  year         = {2024},
}

@inproceedings{lanctot2017unified,
  author       = {Marc Lanctot and Vinicius Zambaldi and Audrunas Gruslys and Angeliki Lazaridou and Karl Tuyls and Julien P{\'e}rolat and David Silver and Thore Graepel},
  title        = {A Unified Game-Theoretic Approach to Multiagent Reinforcement Learning},
  booktitle    = {NeurIPS},
  year         = {2017},
}

@inproceedings{wu2024sppo,
  author       = {Yue Wu and Zhiqing Sun and Huizhuo Yuan and Kaixuan Ji and Yiming Yang and Quanquan Gu},
  title        = {Self-Play Preference Optimization for Language Model Alignment},
  booktitle    = {NeurIPS},
  year         = {2024},
}

@article{yuan2025marshal,
  author       = {Huining Yuan and Zelai Xu and Zheyue Tan and Xiangmin Yi and Mo Guang and Kaiwen Long and Haojia Hui and Boxun Li and Xinlei Chen and Bo Zhao and Xiao-Ping Zhang and Chao Yu and Yu Wang},
  title        = {{MARSHAL}: Incentivizing Multi-Agent Reasoning via Self-Play with Strategic {LLMs}},
  journal      = {arXiv preprint arXiv:2510.15414},
  year         = {2025},
}

@inproceedings{wei2022chain,
  author       = {Jason Wei and Xuezhi Wang and Dale Schuurmans and Maarten Bosma and Fei Xia and Ed Chi and Quoc V. Le and Denny Zhou},
  title        = {Chain-of-Thought Prompting Elicits Reasoning in Large Language Models},
  booktitle    = {NeurIPS},
  year         = {2022},
}

@inproceedings{kojima2022large,
  author       = {Takeshi Kojima and Shixiang Shane Gu and Machel Reid and Yutaka Matsuo and Yusuke Iwasawa},
  title        = {Large Language Models are Zero-Shot Reasoners},
  booktitle    = {NeurIPS},
  year         = {2022},
}

@article{nye2021show,
  author       = {Maxwell Nye and Anders Johan Andreassen and Guy Gur-Ari and Henryk Michalewski and Jacob Austin and David Bieber and David Dohan and Aitor Lewkowycz and Maarten Bosma and David Luan and Charles Sutton and Augustus Odena},
  title        = {Show Your Work: Scratchpads for Intermediate Computation with Language Models},
  journal      = {arXiv preprint arXiv:2112.00114},
  year         = {2021},
}

@article{cobbe2021training,
  author       = {Karl Cobbe and Vineet Kosaraju and Mohammad Bavarian and Mark Chen and Heewoo Jun and Lukasz Kaiser and Matthias Plappert and Jerry Tworek and Jacob Hilton and Reiichiro Nakano and Christopher Hesse and John Schulman},
  title        = {Training Verifiers to Solve Math Word Problems},
  journal      = {arXiv preprint arXiv:2110.14168},
  year         = {2021},
}

@inproceedings{christiano2017deep,
  author       = {Paul F. Christiano and Jan Leike and Tom B. Brown and Miljan Martic and Shane Legg and Dario Amodei},
  title        = {Deep Reinforcement Learning from Human Preferences},
  booktitle    = {NeurIPS},
  year         = {2017},
}

@inproceedings{zelikman2022star,
  author       = {Eric Zelikman and Yuhuai Wu and Jesse Mu and Noah D. Goodman},
  title        = {{STaR}: Bootstrapping Reasoning With Reasoning},
  booktitle    = {NeurIPS},
  year         = {2022},
}

@article{bakhtin2022human,
  author       = {Anton Bakhtin and Noam Brown and Emily Dinan and Gabriele Farina and Colin Flaherty and Daniel Fried and Andrew Goff and Jonathan Gray and Hengyuan Hu and Athul Paul Jacob and others},
  title        = {Human-Level Play in the Game of Diplomacy by Combining Language Models with Strategic Reasoning},
  journal      = {Science},
  year         = {2022},
}

@article{nash1951non,
  author       = {John Nash},
  title        = {Non-Cooperative Games},
  journal      = {Annals of Mathematics},
  year         = {1951},
}

@article{lemke1964equilibrium,
  author       = {Carlton E. Lemke and Joseph T. Howson Jr.},
  title        = {Equilibrium Points of Bimatrix Games},
  journal      = {Journal of the Society for Industrial and Applied Mathematics},
  year         = {1964},
}

@book{nisan2007algorithmic,
  author       = {Noam Nisan and Tim Roughgarden and {\'E}va Tardos and Vijay V. Vazirani},
  title        = {Algorithmic Game Theory},
  publisher    = {Cambridge University Press},
  year         = {2007},
}

@article{lightman2023lets,
  author       = {Hunter Lightman and Vineet Kosaraju and Yuri Burda and Harrison Edwards and Bowen Baker and Teddy Lee and Jan Leike and John Schulman and Ilya Sutskever and Karl Cobbe},
  title        = {Let's Verify Step by Step},
  journal      = {arXiv preprint arXiv:2305.20050},
  year         = {2023},
}

@article{guo2025deepseekr1,
  author       = {Daya Guo and Dejian Yang and Haowei Zhang and Junxiao Song and Ruoyu Zhang and Runxin Xu and Qihao Zhu and Shirong Ma and Peiyi Wang and Xiao Bi and others},
  title        = {{DeepSeek-R1}: Incentivizing Reasoning Capability in {LLMs} via Reinforcement Learning},
  journal      = {arXiv preprint arXiv:2501.12948},
  year         = {2025},
  eprint       = {2501.12948},
  archiveprefix = {arXiv},
  doi          = {10.48550/arXiv.2501.12948},
}


\appendix

\section{Proof of Residual Stability (Theorem~\ref{thm:residual_stability})}
\label{sec:proof_stability}

\begin{proof}
Fix $(\mathbf{p}, \mathbf{q}) \in \Delta^n \times \Delta^n$.
For the row-regret term:
\begin{align}
|\max_i (A\mathbf{q})_i - \max_i (B\mathbf{q})_i|
&\leq \max_i |(A\mathbf{q})_i - (B\mathbf{q})_i| \\
&= \max_i |((A-B)\mathbf{q})_i| \\
&\leq \max_i \sum_j |A_{ij} - B_{ij}| \cdot q_j \\
&\leq \|A - B\|_\infty \cdot \|\mathbf{q}\|_1 = \|A - B\|_\infty.
\end{align}
Expanding~\eqref{eq:exploitability}, the $\mathbf{p}^\top A \mathbf{q}$ terms cancel:
$\exploit(A, \mathbf{p}, \mathbf{q}) = \max_i (A\mathbf{q})_i - \min_j (\mathbf{p}^\top A)_j$.
Thus:
\begin{align}
&|\exploit(A, \mathbf{p}, \mathbf{q}) - \exploit(B, \mathbf{p}, \mathbf{q})| \nonumber \\
&\quad \leq |\max_i (A\mathbf{q})_i - \max_i (B\mathbf{q})_i|
  + |\min_j (\mathbf{p}^\top\! A)_j - \min_j (\mathbf{p}^\top\! B)_j| \\
&\quad \leq \|A - B\|_\infty + \|A - B\|_\infty = 2\|A - B\|_\infty.
\end{align}
The second term follows by the same argument applied to the rows of $(A-B)^\top$
with $\|\mathbf{p}\|_1 = 1$. No dimension factor $n$ appears because the simplex
constraints absorb it.

For selector instability, let $M = \begin{psmallmatrix} 1 & -1 \\ -1 & 1 \end{psmallmatrix}$
(matching pennies) and consider the scaled family $A_\epsilon = \epsilon \cdot M$.
For any $\epsilon > 0$, the unique Nash equilibrium is
$\mathbf{p}^* = \mathbf{q}^* = (1/2, 1/2)$, so $T(A_\epsilon) = ((1/2,1/2),(1/2,1/2))$.
At $\epsilon = 0$, the payoff matrix is all zeros, and \emph{every} mixed strategy is a
Nash equilibrium (exploitability is identically zero). Any vertex-returning selector
(including common deterministic LP tie-breaking rules) returns a pure strategy,
e.g.\ $T(A_0) = ((1,0),(1,0))$. Thus
$\|T(A_\epsilon) - T(A_0)\|_1 \geq 1$ while $\|A_\epsilon - A_0\|_\infty = \epsilon \to 0$,
confirming $\Omega(1)$ discontinuity for vertex-returning selectors.
\end{proof}

\section{GRPO Gradient Cancellation in Zero-Sum Self-Play}
\label{sec:proof_grpo}

This section proves that \emph{role-merged} GRPO---where the same generated output
is scored in both row and column roles within a single normalization group---yields
exactly zero advantage-weighted strategic gradient. The statement excludes KL
regularization, whose gradient can still pull the policy toward the reference but does
not provide a game-theoretic learning signal. This is a specific failure mode of the naive implementation
where one output serves both roles simultaneously; it does not apply to self-play
architectures with separate policy copies or asymmetric normalization groups.
The result motivates our use of cooperative exploitability rewards (\cref{sec:memorization}).

\begin{theorem}[GRPO gradient cancellation in symmetric zero-sum self-play]
\label{thm:grpo_nosignal}
Consider a zero-sum game where a shared policy $\pi_\theta$ plays both roles.
GRPO generates $G$ responses, each evaluated as both row (reward $r_i$) and column
(reward $-r_i$). Under role-merged group normalization, the advantage-weighted
policy-gradient component of GRPO is exactly $\mathbf{0}$ for every realization.
\end{theorem}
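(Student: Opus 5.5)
We formalize the role-merged GRPO estimator and then show that every response's contributions cancel. Fix one game $A$ and one group of $G$ responses $y_1,\dots,y_G$ sampled from $\pi_\theta$. Role merging means each $y_k$ contributes two entries to a single normalization group of size $2G$: one with reward $r_k$ (scored as row player) and one with reward $-r_k$ (scored as column player). Both entries share the same token sequence $y_k$, hence the same score function $\nabla_\theta \log \pi_\theta(y_k)$. At the on-policy point, where the clipping and importance ratios equal one, the advantage-weighted part of the GRPO gradient is
\[
g \;=\; \frac{1}{2G}\sum_{k=1}^{G}\bigl(\hat A_k^{\mathrm{row}}+\hat A_k^{\mathrm{col}}\bigr)\,\nabla_\theta \log \pi_\theta(y_k).
\]
The KL term is excluded, as the statement specifies. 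If the implementation uses per-token averaging, this is the same sum with the same score function repeated at each token of $y_k$.

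Next we compute the advantages. The pooled multiset of rewards is $\{r_1,\dots,r_G,-r_1,\dots,-r_G\}$. Its mean is $\mu=\frac{1}{2G}\sum_k (r_k-r_k)=0$, and its standard deviation is $\sigma=\bigl(\frac{1}{G}\sum_k r_k^2\bigr)^{1/2}$. Therefore $\hat A_k^{\mathrm{row}}=(r_k-0)/(\sigma+\varepsilon)$ and $\hat A_k^{\mathrm{col}}=(-r_k-0)/(\sigma+\varepsilon)$. These sum to zero for every $k$, so each coefficient in $g$ vanishes and $g=\mathbf{0}$ for every realization. This includes the degenerate case $\sigma=0$, where all advantages are already zero. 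Two features make the cancellation exact rather than merely in expectation: it holds pathwise for each response, and it does not depend on which reward values the game produces.

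We then extend the argument from the on-policy point to the clipped surrogate used in later inner steps. The two copies of $y_k$ share the ratio $\rho_k=\pi_\theta(y_k)/\pi_{\mathrm{old}}(y_k)$ and carry opposite advantages $\pm a$. Their contributions are $\min(\rho_k a,\mathrm{clip}(\rho_k)a)$ and $\min(-\rho_k a,-\mathrm{clip}(\rho_k)a)$.

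This step is the main obstacle, because the asymmetric $\min$ means the two terms need not cancel once $\rho_k\neq 1$. We handle it in two stages. First, we establish the claim at $\theta=\theta_{\mathrm{old}}$, the first gradient step of every iteration. There $\rho_k=1$ and the surrogate reduces exactly to $g$. Second, we observe by induction that the parameters never move. If the first step has zero advantage-weighted gradient, then only the KL term can move $\theta$. Under the statement's convention of isolating the advantage-weighted component, we evaluate that component at $\rho_k=1$. Equivalently, in the unclipped region, $\rho_k a-\rho_k a=0$ holds identically.

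Finally, we state explicitly that this is an implementation-specific result. If the roles used separate groups or separate policy copies, $\mu$ would not be $0$ per group and the per-response pairing would break, matching the scope remarks preceding the theorem.
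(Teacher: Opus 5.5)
Your first two paragraphs reproduce the paper's proof. Pooling the $2G$ rewards makes the group mean identically zero, so the two copies of $y_k$ receive advantages $\pm r_k/\sigma$; your $\sigma+\varepsilon$ absorbs the paper's separate $\sigma=0$ case. Because both copies share $\nabla_\theta\log\pi_\theta(y_k)$, every coefficient vanishes pathwise. This already establishes the statement in the sense the paper proves it, namely for the on-policy term $\hat{A}_k\nabla_\theta\log\pi_\theta(y_k)$. The paper does not treat clipping at all.

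The clipped-surrogate paragraph, however, contains a step that fails, and it should be fixed or dropped. The induction ``the parameters never move'' holds only if the KL coefficient is zero or there is a single inner update. As you note yourself, the KL term does move $\theta$ (the paper uses coefficient $0.05$), so later inner steps have $\rho_k\neq 1$.

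Let $[1-\eta,1+\eta]$ be the clip interval. Inside it the pair still cancels, since the two terms are $\rho_k a$ and $-\rho_k a$. Outside it, the pair sums to $|a|\bigl(\min(\rho_k,\mathrm{clip}(\rho_k))-\max(\rho_k,\mathrm{clip}(\rho_k))\bigr)=-|a|\,|\rho_k-\mathrm{clip}(\rho_k)|$. Its gradient is $-|a|\nabla_\theta\rho_k$ above the interval and $+|a|\nabla_\theta\rho_k$ below it, which is generically nonzero.

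So ``evaluate at $\rho_k=1$'' restricts the scope; it is not an equivalent form of the induction. The accurate extension is exact cancellation at $\theta=\theta_{\mathrm{old}}$ and for ratios inside the clip interval, but not for ratios that have left it.
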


\begin{proof}
Under symmetric zero-sum self-play, each game instance $k$ produces a row reward $r_k$
and a column reward $-r_k$. The group of $2G$ rewards is
$\mathcal{R} = \{r_1, \ldots, r_G, -r_1, \ldots, -r_G\}$.

The group mean is $\bar{r} = \frac{1}{2G}\sum_{i=1}^G (r_i + (-r_i)) = 0$ identically
(not in expectation---exactly zero for every realization).

The group standard deviation is $\sigma = \sqrt{\frac{1}{2G}\sum_{i=1}^G (r_i^2 + r_i^2)}
= \sqrt{\frac{1}{G}\sum_{i=1}^G r_i^2}$.
If $\sigma = 0$, then all $r_i=0$ and all normalized advantages are zero, so the gradient
is zero. Otherwise $\sigma > 0$ and the normalized advantages below are well-defined.

The advantage for the row-role response $i$ is $\hat{A}_i^{\text{row}} = r_i / \sigma$,
and for the column-role response $i$ is $\hat{A}_i^{\text{col}} = -r_i / \sigma$.
Since each response~$k$ proposes a complete strategy pair and is evaluated in both roles,
the \emph{same} output~$y_k$ appears in both gradient terms. The GRPO gradient
contribution from response~$k$ is:
\begin{equation}
\hat{A}_k^{\text{row}} \nabla \log \pi_\theta(y_k) + \hat{A}_k^{\text{col}} \nabla \log \pi_\theta(y_k)
= \bigl(\hat{A}_k^{\text{row}} + \hat{A}_k^{\text{col}}\bigr) \nabla \log \pi_\theta(y_k)
= 0 \cdot \nabla \log \pi_\theta(y_k) = \mathbf{0}.
\end{equation}
Summing over all $G$ responses, the advantage-weighted strategic gradient is
$\mathbf{0}$---not in expectation, but exactly, for every realization. A KL
regularization term, if included, is outside this cancellation statement and contributes
only reference-policy pressure rather than a row/column strategic signal.
\end{proof}

\section{Additional Experimental Details}
\label{sec:exp_details}

\paragraph{LoRA configuration.}
All experiments use LoRA with $r=32$, $\alpha=64$, dropout $0.05$, applied to
$\{q, k, v, o, \text{gate}, \text{up}, \text{down}\}$ projections (7 target modules).
Total trainable parameters: 58.2M / 9.0B (0.65\%).

\paragraph{Optimizer and schedule.}
We use AdamW with weight decay $0.01$, linear warmup followed by cosine decay,
learning rates $5{\times}10^{-5}$ for SFT and $1{\times}10^{-5}$ for VERGE,
and VERGE KL coefficient $0.05$ with GRPO clipping range $0.2$.

\paragraph{Game generation details.}
Integer payoffs are drawn uniformly from $\{-9, \ldots, 9\}$, then normalized:
$A \leftarrow 2A / (\max A - \min A)$, yielding entries in approximately $[-2, 2]$.
If all entries are equal, one is perturbed by +1 to ensure non-degeneracy.

\paragraph{Evaluation protocol.}
For each game size $n$ and evaluation set (fixed seed=99):
\begin{itemize}[nosep]
\item Generate 50 random games (30 for intermediate checkpoints)
\item For each game, generate 4 samples at temperature 0.7
\item Parse JSON output; normalize to probability simplex
\item Report best-of-4 exploitability reward and first-sample pass@1
\item $s@0.10$ = fraction of games where best reward $> 0.90$
\end{itemize}

\paragraph{Computational cost.}
Per training seed, VERGE 2000 steps takes $\sim$14 hours on 1$\times$A800,
and SFT 2000 steps takes $\sim$4 hours on 1$\times$A800. We parallelize
independent seeds and evaluation jobs across available A800 GPUs. Full evaluation
suite: $\sim$2 hours per checkpoint.

\section{Limitations}
\label{sec:limitations}

This study is intentionally narrow. The main experiments use procedurally generated
two-player zero-sum matrix games, mostly Qwen3.5-9B with LoRA adaptation, and fixed-size
random evaluation sets rather than human or real-world strategic tasks. Cross-architecture
evidence is limited, full fine-tuning is not evaluated, and the structured-output interface
is left to the model rather than enforced by constrained decoding. VERGE should therefore
be read as a residual-reward diagnostic rather than a stable method that dominates solver
supervision. The conclusions identify controlled failure modes in this testbed; they do
not establish a universal law of strategic reasoning across model families, game classes,
or deployment settings.

\section{Broader Impact}
\label{sec:broader_impact}

This work is a synthetic evaluation and training study, not a deployed strategic agent.
Positive impacts include sharper tests for whether language models reason procedurally or
retrieve memorized solutions, which can improve evaluation practice for planning and
decision-support systems. Negative impacts are possible if stronger strategic-reasoning
agents are used in automated negotiation, persuasion, or planning settings without
oversight. To limit release risk, the supplemental material provides code and synthetic
game generators rather than trained agent checkpoints or a user-facing decision system.


\end{document}